\newtheorem{theorem}{Theorem}
\newcommand{\real}{\mathbb{R}}
\newcommand{\bx}{\mathbf{x}}
\newcommand{\bw}{\mathbf{w}}
\newcommand{\bM}{\mathbf{M}}
\begin{document}
	
	\twocolumn[
	\icmltitle{Learning Algorithms via Neural Logic Networks}

	\begin{icmlauthorlist}
		\icmlauthor{Ali Payani}{gt}
		\icmlauthor{Faramarz Fekri}{gt}
	\end{icmlauthorlist}
	
	\icmlaffiliation{gt}{Department of Electrical and Computer Engineering
		Georgia Institute of Technology}
	
	\icmlcorrespondingauthor{Ali Payani}{payani@ece.gatech.edu}
	\icmlcorrespondingauthor{Faramarz Fekri}{fekri@ece.gatech.edu}

% \author[1]{Author A}
% \author[1]{Author B}
% \affil[1]{Department of Computer Science, \LaTeX\ University}
% \affil[ ]{\textit {\{payani,fekri\}@ece.gatech.edu}}
	 
	\icmlkeywords{Machine Learning, ICML}

	\vskip 0.3in
	]
	\myprint{}
	
	\begin{abstract}
		We propose a novel learning paradigm for Deep Neural Networks (DNN) by using Boolean logic algebra.
We first present the basic differentiable operators of a Boolean system such as conjunction, disjunction and exclusive-OR and show how these elementary operators can be combined in a simple and meaningful way to form Neural Logic Networks (NLNs). We examine the effectiveness of the proposed NLN framework in learning Boolean functions and discrete-algorithmic tasks.
We demonstrate that, in contrast to the implicit learning in MLP approach, the proposed neural logic networks can learn the logical functions explicitly that can be verified and interpreted by human. In particular,  we propose a new framework for learning the inductive logic programming (ILP) problems by exploiting the explicit representational power of NLN.  We show the proposed neural ILP solver is capable of feats such as predicate invention and recursion and can outperform the current state of the art neural ILP solvers using a variety of benchmark tasks such as decimal addition and multiplication, and sorting on ordered list.
	\end{abstract}
	\section{Introduction}
	\label{sec:introduction}
	Deep Neural Networks (DNNs) based on Convolution Neural Networks (CNNs) and  Recurrent Neural Networks (RNNs) have improved the state of the art in various areas such as natural language processing \cite{collobert2008unified}, image and video processing \cite{krizhevsky2012imagenet}, and Speech recognition \cite{dahl2012context} just to name a few. 
However, while in theory it is known that DNNs and specifically RNNs can be Turing complete and capable of learning any program \cite{siegelmann1992computational}, there has been limited success in using DNNs for learning algorithmic problems. Even a rather simple decimal multiplication problem is very difficult to learn just by providing the model with input/output pairs of examples \cite{kaiser2015neural}. 
%In a nutshell, a computer program is usually consisted of three main components; the basic arithmetics, flow control, and the memory. 
%
%In recent years, successful reports such as Neural Turing Machine \cite{graves2014neural} have inspired the machine learning community to apply neural networks for learning algorithmic tasks. The development of several principles such as shared memory, memory controllers, and recurrent sequence to sequence networks in \cite{graves2014neural} and many other follow up papers \cite{andrychowicz2016learning,joulin2015inferring,kumar2016ask,kaiser2015neural} led to the increased capacity of the regular RNNs for learning tasks. 
%
%The controller can simply compute a soft assignments of memory blocks or can be more elaborate as in more recent papers \cite{graves2014neural,kumar2016ask}. For example, a stack-augmented model was introduced in \cite{joulin2015inferring} which achieves impressive performance on some basic algorithmic tasks such as memorization and binary additions. 
%
%Further, \cite{kaiser2015neural} proposed to employ several layers of GRU units and process the data as a whole vector; unlike the typical sequence to sequence models. This new design produced excellent performance for binary addition and multiplication tasks. 
%
In particular, MLP based come with some limitations. These model, in general, do not construct any explicit and symbolic representation of the algorithm they learned and the algorithm is implicitly stored in thousands or even millions of weights, which is typically impossible to be deciphered and verified by human agents. Further, MLP networks are usually suitable for cases where there are many training examples, and usually do not generalize well where there only limited training examples. 
One of the most successful machine learning approaches that addresses these shortcomings for learning discrete algorithmic tasks is the Inductive Logic Programming (ILP). In ILP, explicit rules and symbolic logical representations can be learned using only a few training examples and these models are usually able to generalize well. Further, the explicit symbolic representation that is obtained via ILP can be understood and verified by human, and can also be used to write programs in any conventional programming language. 
Recently there has been some attempts to bridge the gap between the two discipline and to use the deep learning methods in solving the ILP problems. These works usually rely on some forms of transforming the ILP satisfiability problem into a differentiable problem which in turn could be solved by gradient descent algorithms \cite{holldobler1999approximating,bader2008connectionist,francca2014fast,serafini2016logic,evans2018learning}.
In this paper we present an alternative approach to the traditional MLP design for learning Boolean functions and aim to address some of the shortcoming of the MLP for learning discrete-algorithmic tasks. 
Our key idea is to define a set of differentiable Boolean operators that can be combined in a multi-layer cascade design like MLP, and are capable of computing and learning Boolean functions. Unlike MLP, our proposed model provides explicit symbolic representation which could be tested and verified by human.
We further demonstrate that the proposed approach can be used to transform the ILP into a differentiable problem and solve it using gradient optimizers more efficiently than the existing neural ILP solvers. 
The general idea of representing and learning Boolean functions using neural networks is not new. There is significant body of research from the early days of machine learning using neural networks that is focused on the theoretical aspects of this problem. Some of special Boolean functions such as parity-N and XOR has been the subject of special interest as benchmark tasks for theoretical analysis. Minsky and Papert \cite{minsky2017perceptrons,wasserman1989neural} for example showed the impossibility of representing all functional dependence and proved this for XOR logic function while other works demonstrate the possibly of doing so by adding hidden layers. 
From the practical standpoint however, as was suggested by many works (for example \cite{steinbach2002neural}), any Boolean function can be learned by a proper multi layer design equipped with proper activation functions. However, as we will show in following chapters, there are scenarios where they do not perform well. Moreover, even if they learn successfully, it is notoriously difficult to decipher the actual learned Boolean function. 

In contrast, in this paper, we propose a new design for the logical operators (namely Neural Logic Networks (NLN)) by using membership weights without the adjustable bias terms. The NLN network has an explicit representational power which separates the proposed models from the previous works. 
%In particular, we exploit to design a new neural ILP solver. 
%
In this paper, first, we introduce general purpose conjunction, disjunction and the exclusive OR neurons as the basic elements of the NLN.
We would then demonstrate the properties and characteristics of the proposed model in three areas :
\begin{itemize}[noitemsep,topsep=0pt,parsep=0pt,partopsep=0pt]
\item \emph{Learning Boolean functions efficiently} : In Section \ref{sec:CompareMLP}, we demonstrate how the NLN compares to the MLP in learning Boolean functions.
\item \emph{Generalization} : In Section \ref{sec:LDPC}, 
%we show how well the proposed NLN based models can generalize in discrete iterative algorithms compared to the traditional MLP. Specifically, 
we compare the generalization performance of NLN and MLP in learning a message passing decoding algorithm for Low Density Parity Check Codes (LDPC) over erasure channels.
\item \emph{Explicit symbolic representation} : %Unlike MLP, the learned boolean functions in NLNs can be expressed explicitly. In (\nameref{subsec:ILP}), we propose a new paradigm for solving ILP problems by exploiting the explicit representational power of NLNs. The proposed solver, which can learn various ILP problems very efficiently and is able to use features such as predicate invention, recursion and functions while using a remarkably small number of training weights. 
In Section \ref{sec:ILP}, we propose a new algorithm for solving ILP problems by exploiting the explicit representational power of NLN.
% The proposed method is able to use features such as predicate invention, recursion while using a significantly small number of training weights.
\end{itemize}
%
%Instead of proposing a new recurrent design and sophisticated memory access circuits, we show that by simply combining the basic elements of our proposed logical-learning layers, we can outperform many of the current state of the art results in algorithmic-learning tasks. 
%
%Moreover, while the performance of most algorithm-learning models in the literature is usually enhanced by the curriculum learning \cite{bengio2009curriculum} of some sort, which basically allows the network to learn smaller size problems before it learns the the more difficult ones, for simplicity, we ignore this and we directly train our proposed models using the highest degree of problem difficulty.
	\section{Neural Logic Layers} 
	\label{sec:LogicLayer}
	\subsection{Neural Conjunction and Disjunction Layers}
	\label{subsec:ConjDisj}
	%Boolean functions 
%Simple conjunction and disjunction functions can be represented by single layer perceptron network equipped with a bias term and hard thresholding function and while for more complex Boolean functions such as XOR was not possible using one layer of additive perception network, by adding more layer any arbitrary Boolean function can be represented. 
%However, during experimentations, we realized that the typical additive neural networks with bias terms are not suitable for use tasks such as ILP solver system. The most important reason is that in general there is no straightforward way of decoding the symbolic logic that is learned by the weights and the bias terms of the multi layer network. Furthermore, for some more complex logical functions they are not performing as well as will demonstrate in an example later in this section. Thus, we propose a set of new neural layers which are designed based on multiplicative network instead of the the typical additive perception. This design further alleviates the need for a bias term which makes the interpretation of the learned Boolean function much easier.
Throughout this paper, we use the extension of the Boolean values to real values in the range $[0,1]$ and we use 1 (True) and 0 (False) representations for the two states of a binary variable. We also define the fuzzy unary and dual Boolean functions of two Boolean variables $x$ and $y$ as:
%\begin{subequations}
%	\label{eq:BoolAlgebra}
%	\begin{align}
%	\bar{x} &\quad=\quad  1 - x  \qquad,\qquad  \bar{y} =  1 - y \\
%	x \bigwedge y &\quad=\quad   xy \\
%	x \bigvee y &\quad=\quad \overline{\bar{x} \bigwedge \bar{y}}   \quad=\quad 1 - ( 1 - x )( 1 - y)  
%	\end{align}
%\end{subequations}
\begin{subequations}
	\label{eq:BoolAlgebra}
	\begin{align}
	\bar{x} \quad&=\quad  1 - x  \qquad,\quad	x \bigwedge y \quad=\quad   xy  \qquad \\
	x \bigvee y  &= 1 - ( 1 - x )( 1 - y)  
	\end{align}
\end{subequations}
%where $\otimes$ denotes the exclusive OR operation. 
%
This algebraic representation of the Boolean logic allows us to manipulate the logical expressions via Algebra. 
Let $\bx^n \in \{0,1\}^n$ be the input vector in a typical logical neuron.
\begin{figure}[tb]
	\centering
	\subfloat[][]{
		\begin{tabular}{|c|c|c|}
			\hline  
			$x_i$ & $m_i$ & $F_c$ \\ \hline	\hline  
			0 & 0 & 1 \\ \hline
			0 & 1 & 0 \\ \hline
			1 & 0 & 1 \\ \hline
			1 & 1 & 1 \\ \hline
		\end{tabular}
		\label{fig:Fc}%
	}%
	\qquad
	\subfloat[][]{
		\begin{tabular}{|c|c|c|}
			\hline	 
			$x_i$ & $m_i$ & $F_d$ \\ \hline \hline
			0 & 0 & 0 \\ \hline
			0 & 1 & 0 \\ \hline
			1 & 0 & 0 \\ \hline
			1 & 1 & 1 \\ \hline
		\end{tabular}
		\label{fig:Fd}%
	}
	\caption{Truth table of $F_c(\cdot)$ and $F_d(\cdot)$ functions}%
	\label{fig:FcFd}%
\end{figure}
To implement the conjunction function, we would like to select a subset in $\bx^n$ and apply the fuzzy conjunction (i.e. multiplication) to the selected elements. One way to accomplish this is to use a softmax function and select the elements that belong to the conjunction function similar to the concept of pointer networks \cite{vinyals2015pointer}. This requires knowing the number of items in the subset (i.e. the number of terms in the conjunction function) in advance. Moreover, in our experiment we found that the convergence of model using this approach is very slow for larger input vectors. 
Alternatively, we associate a trainable Boolean membership weight $m_i$ to each input elements $x_i$ from vector $\bx^n$. Further, we define a Boolean function $F_c(x_i,m_i)$ with the truth table as in Fig.\ref{fig:Fc} which is able to include (exclude) each element in (out of) the conjunction function. This design ensures the incorporation of each element $x_i$ in the conjunction function only when the corresponding membership weight is $1$. Consequently, the neural conjunction function can be defined as:
\vspace{-2mm}
\begin{align}\label{eq:conj}
O_{conj}(\bx) &= \prod_{i=1}^{n} F_c(x_i,m_i) \,\,\,  \nonumber\\
\text{where, }  F_c(x_i,m_i) &= \overline{x_i \overline{m_i}} = 1 - m_i ( 1 - x_i) \,,
\end{align}
where $O_{conj}$ is the output of conjunction neuron. To ensure the trainable membership weights remain in the range $[0,1]$ we use a sigmoid function, i.e. $m_i = sigmoid( c \,w_i )$ where $c \ge 1$ is a constant. 
%Alternatively, one may use hard thresholding function for this purpose. 
Similar to perceptron layers, we can stack $m$ neural conjunction neurons to create a conjunction layer of size $m$. This layer has the same complexity as a typical perceptron layer without incorporating any bias term. More importantly, this way of implementing the conjunction layer makes it possible to interpret the learned Boolean function directly from the values of the membership weights.  

The disjunctive neuron can be defined similarly by introducing membership weights but using the function $F_d$ with truth table as depicted in Fig.\ref{fig:Fd}. This function ensures an output 0 from each element when the membership is zero which correspond to excluding the $x_i$ element from the neuron outcome. Therefore, the neural disjunction function can be expressed as:
\begin{align}\label{eq:disj}
O_{disj}(\bx) = \overline { \prod_{i=1}^{n} \overline {F_d(x_i,m_i)} } &=  1 -  \prod_{i=1}^{n} ( 1 - F_d(x_i,m_i) )  \,, \nonumber \\
\text{where, } F_d(x_i,m_i) &= x_i m_i 
\end{align}
By cascading a conjunction layer with a disjunctive layer, we can create a multi-layer structure which is able to learn and represent Boolean functions using the Disjunctive Normal Form (DNF). Similarly we can construct the Conjunctive Normal Form (CNF) by cascading the two layers in the reverse order.
The total number of possible logical functions over a Boolean input vector $\bx \in \{0,1\}^n$ is very large (i.e. $2^{2^n}$). Further, %\color{red}it is known that many of those functions can be expressed via a small number of clauses using CNF or DNF forms\color{black}. However, 
in some cases, a simple clause in one of those standard forms can lead to an exponential number of clauses when expressed in the other form
%~\footnote{We can always convert CNF to DNF form and vice versa using De Morgan's Law.}.
For example, it is easy to verify that converting  $(x_1 \bigvee x_2 ) \bigwedge (x_3 \bigvee x_4)\bigwedge\dots\bigwedge(x_{n-1} \bigvee x_{n} )$ to DNF leads to $2^{\frac{n}{2}}$ number of clauses. As such, using only one single form of Boolean network for learning all possible Boolean functions is not always the best approach. 
The general purpose design of the proposed conjunction and disjunction layers allows us to define the appropriate Boolean function suitable for the problem. 
%As as example, using a CNF form, we can define predicate functions as we wish without modifying the ILP solver framework. For example, we can use a DNF format for defining one intensional predicate while using a CNF format for the other predicates. 

\subsection{Convergence and Initialization}\label{subsec:init_weights}
For a single Boolean layer, it can be easily shown that using a small enough learning rate, if we have counter examples in each training batch, they are guaranteed to converge. For example, by examining the conjunction function in (\ref{eq:conj}), it is easy to verify that if $m_i$ is supposed to be 1, we would need a training example with $x_i=0$ and $O_{conj}=1$ to have a negative gradient necessary for adjusting $m_i$ towards $1$. This can be easily verified considering that $\frac{\partial O_{conj}}{\partial m_i} \propto (x_i-1)$. 
%In practice, for the multi-layer design in the proposed NLN, when the size of the layers are properly set, the model usually converges to the desired function.

The only parameter which we need to adjust for training these layers is the initial values for the membership weights $m_i$ (or corresponding $w_i$). During the experiments, we realized that while the speed of convergence somewhat depends on the initial values for the weights, in moderate size problems, the network is able to find the optimal setting and converges to the desired output. As such, we usually initialize all the weights randomly using normal distribution with zero mean. 
However, in cases where the dimension of the input vector is very large, this type of initialization may result in a very slow convergence in the beginning. Due to the multiplicative design of these layers, when many of the membership variables have values which are not zero or one, the gradient can becomes extremely small. To avoid this situation, we must ensure that most of the membership variables are almost zero in the beginning. In our experiments we usually initialize the membership weights by randomly setting a small subset of inputs to values close to $1$ and we initialize the rest of membership variables to very small constants (e.g. $1e-3$). Alternatively we can initialize weights by a normal distribution with negative mean which needs to be adjusted correctly dependent on the size of the layer.

	\subsection{Neural XOR Layer}
	\label{sec:XOR}
	Exclusive OR (XOR) is another important Boolean function which has been the subject of many researches over the years, especially in the context of parity-N learning problem. It is easy to verify that expressing XOR of an $n$-dimensional input vector in DNF form requires $2^{n-1}$ clauses. Although, it is known that it cannot be implemented using a single perceptron layer \cite{minsky2017perceptrons,duch2006k}, it can be implemented, for example, using multilayer perceptron or multiplicative networks combined with small threshold values and sign function activation \cite{iyoda2003solution}. However, none of these approaches allow for explicit representation of the learned XOR functions directly. Here we propose a new algorithm for learning XOR function (or equivalently the parity-N problem).
%
%Although, at the first glance, we can use the simple mechanism of membership weights in Section~\ref{sec::LogicLayer} to represent XOR, we realized that a similar but more complicated approach can still be used in designing the XOR logical layers.
%
To form the logical XOR neuron, we first define $k$ functions of the form:
\begin{equation}\label{eq:fi}
  \begin{aligned}
f_1 ( \bx) &= x_1  + x_2 + \dots  + x_{k} \,\,\boxed{ -  x_{k+1}   - \dots -  x_{n} }\\
f_2 ( \bx) &= x_1 + x_2 + \dots  \,\,\boxed{- x_{k} -   \dots  - x_{n-1} } + x_{n} \\
&\vdots\qquad\qquad\vdots\qquad\qquad\vdots\qquad\qquad\vdots\qquad\qquad\vdots\qquad\qquad\\
f_k ( \bx) &= x_1 \,\,\boxed{- x_2 - \dots  -x_k  -  x_{k+1} } +\dots  + x_{n}
  \end{aligned}
  \end{equation}
where $k = \frac{n}{2}$ (assuming $n$ is even). Then, we define the XOR function as in Theorem \ref{theorem:xor}.
%For any boolean vector $\bx$, the function $f_i(.)$ produces an integer number in the range of $[-k,k]$. However,  as we show in the following, these functions are designed such that at least one of them becomes zero if  $XOR(\bx) = x_1 \otimes x_2 \otimes \dots \otimes x_{n-1} \otimes x_n$ is equal to zero. While this set of functions are not unique, it can be show that we need at least $k$ such functions to cover all cases where XOR would become zero. 
%
%
%
%
\begin{theorem} \label{theorem:xor}
	Given the set of $k$ functions as defined in (\ref{eq:fi} we have:
\begin{subequations}
\label{eq:compute_xor}
\begin{align}
\text{XOR}(\bx) &= g_1(\bx) \bigwedge g_2(\bx) \bigwedge \dots\bigwedge g_k(\bx)  \label{subeq:xor1} \,,\\  
\text{where,   } g_i ( \bx ) &= 
\begin{cases}
0   \ \ \ \ \mbox{if} \ \ f_i(\bx ) = 0 \\
1   \quad else
\end{cases}\label{subeq:xor2} 
\end{align} 
\end{subequations}
\end{theorem}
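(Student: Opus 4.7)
The plan is to translate the claim into a combinatorial statement about integer sequences. Writing $S = \{j : x_j = 1\}$ and $s = |S|$, I would introduce $p_i = |S \cap P_i|$ and $n_i = |S \cap N_i|$, where $P_i$ and $N_i$ are the positive and negative index sets of $f_i$; then $f_i = p_i - n_i$ and $p_i + n_i = s$, so $f_i = 2p_i - s$. The easy direction is $s$ odd: every $f_i$ is then odd and hence nonzero, every $g_i$ equals $1$, and the right-hand side of (\ref{subeq:xor1}) is $1 = \mathrm{XOR}(\bx)$, matching odd parity.

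The remaining case is $s$ even, where I need to exhibit an index $i \in \{1,\dots,k\}$ with $p_i = s/2$. From (\ref{eq:fi}) I would extract two structural properties of the sliding-window construction. First, comparing $f_i$ and $f_{i+1}$, exactly one input (at position $k-i+1$) switches from positive to negative while exactly one input (at position $n-i+1$) switches from negative to positive, yielding the Lipschitz bound $|p_{i+1}-p_i|\le 1$. Second, defining $f_{k+i} := -f_i$ (equivalently, continuing the slide cyclically for $k$ more steps), the positive and negative regions of $f_{i+k}$ are exactly the negative and positive regions of $f_i$, which gives the antipodal symmetry $p_{i+k} = s - p_i$.

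With these in hand I would run a discrete intermediate value argument on $p_1, p_2, \dots, p_{k+1}$. This sequence takes integer values, consecutive terms differ by at most $1$, and by the antipodal symmetry $p_{k+1} = s - p_1$. If $p_1 = s/2$ then $f_1 = 0$ and $g_1 = 0$ already; otherwise $p_1$ and $p_{k+1}$ lie strictly on opposite sides of $s/2$, and since $s$ is even, $s/2$ is itself an integer, so some $p_{i^*}$ in the sequence must equal $s/2$ exactly. The endpoint $i^* = k+1$ is ruled out by the same antipodal identity (which would force $p_1 = s/2$), so $i^* \in \{2,\dots,k\}$, whence $f_{i^*} = 0$, $g_{i^*} = 0$, and the conjunction is $0 = \mathrm{XOR}(\bx)$. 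The corner cases $s \in \{0,n\}$ are handled directly since then every $f_i = 0$ and $\mathrm{XOR}(\bx) = 0$.

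The main obstacle is not the arithmetic but locating the right invariants: once the Lipschitz property $|p_{i+1}-p_i|\le 1$ and the antipodal symmetry $f_{i+k} = -f_i$ of the sliding window are isolated, the rest is a clean discrete intermediate value statement on $\mathbb{Z}$. I expect verifying the antipodal symmetry from the somewhat compressed display (\ref{eq:fi}) to require the most careful bookkeeping, but no delicate estimates are involved.
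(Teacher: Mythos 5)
Your proof is correct and follows essentially the same route as the paper's: parity rules out zeros of the $f_i$ when $s$ is odd, and for even $s$ a discrete intermediate-value argument along the sliding window (steps of at most $1$ in $p_i$, equivalently $0,\pm 2$ in $f_i$) combined with the negation/antipodal symmetry forces some $f_{i^*}=0$. If anything, your bookkeeping is tighter than the paper's at the key step: the paper loosely claims every relevant coefficient flips sign exactly once within $f_1,\dots,f_k$ and hence that some $f_j$ with $j\le k$ equals $-f_1$ (not literally true, since positions $1$ and $k+1$ never flip within those $k$ functions), whereas your extension to the index $k+1$ with $p_{k+1}=s-p_1$ and the explicit exclusion of that endpoint makes the same intermediate-value argument airtight.
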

\begin{proof}
	See Appendix \ref{proof_th1}.
\end{proof}
Inspired by the Theorem.\ref{theorem:xor}, we design our XOR neuron as:
%\begin{equation}
%\begin{align}
%O_{XOR}(\bx) &= \prod_{i=1}^{k} g_i \qquad \qquad \text{where},\\ 
%g_i &= hs \bigg(  \, \big\lvert \bx \times  ( \bM_i  \odot  \bW )^T  \big\rvert \bigg) 
%\end{align}
\begin{align}
O_{XOR}(\bx) &= \prod_{i=1}^{k} hs \bigg(  \, \big\lvert \bx \times  ( \bM_i  \odot  \bw )^T  \big\rvert \bigg)
\end{align}
%\end{subequations}
%
Here, $hs(\cdot)$ is the hard-sigmoid function, and $\times$ and $\odot$ denote matrix and element-wise multiplication correspondingly. Further, vector $M_i \in \{-1,1\}^n$ is the set of coefficients used in $f_i(\bx)$ and $\bw$ is the vector of membership weights.
The resulting XOR logical neuron uses only one weight variable per input element for learning.
%; as in our logical conjunction and disjunction layers. 
However, its complexity is $k$ times higher than the conjunction and disjunction neurons for an input vector of length $n=2k$.

	\section{NLN vs MLP}
	\label{sec:CompareMLP}
	%In the previous two sections we described the designs of our basic logical layers. 
%
We now compare the performance NLN vs MLP for the task of learning Boolean functions using two synthetic experiments. 
\subsection{Learning DNF form}
%First, we compare the performance of the NLN models and in particular a DNF structure with a muli-layer perception. 
For this experiment, we randomly generate some Boolean functions over a 10 bits input vectors and a randomly generated batches of 50 samples as training data. We train two models; one designed via our proposed DNF network (with 200 disjunction functions) and another designed by two layers MLP network with hidden layer of size 1000 and 'relu' activation and use 'sigmoid' activation function for the output layer. We use ADAM optimizer \cite{KingmaB14} with learning rate of 0.001 for both models and count the number of errors in 1000 randomly generated test samples. 
When we used a Bernoulli distribution with parameter $p=0.5$ (i.e. fair coin toss) for generating the bits of each training samples, both models quickly converge and the number of test error drops to zero for both. However, in many realistic discrete problems, the 0's and 1's are not usually equiprobable. As such, next we use Bernoulli with parameter $p=0.75$. Fig. \ref{fig:dnf_compare} depicts the comparative performance of the two models. The proposed DNF model converges fast and remains at 0 error. On the contrary, the MLP model continues to generate errors. In our experiments, while the number of errors decreases as training continues for an hour, the MLP model never fully converges to the true logical function and occasionally generates some errors. While for some tasks, this may be a negligible error, in some logical applications such as the ILP task (in Section. \ref{sec:ILP}), this behavior prevents the model from learning.
\subsection{Learning XOR function}
Next, we compare the two models for a much more complex task of learning the XOR logic.
We use a multi layer MLP with 'relu' as activation functions in the hidden layers and sigmoid function in the output layer as usual. 
As for NLN, we use a single XOR neuron as described in \ref{sec:XOR}.
For the small size inputs both models quickly converge. However, for larger size input vectors ($n>30$) the MLP model fails to converge at all.
Fig \ref{fig:xor_gate_compare} shows the average bit error over the number of training samples.
%
%We tried various setups for MLP, including different number of hidden layers (from 1 to 3) and different sizes of the hidden layers. For both models we evaluated the performance for 100 randomly seeded runs and we report the best results for each model.
%In none of the cases, the MLP could not learn anything. 
The error rate for MLP was around $.5$, which indicates it failed to learn the XOR function. On the contrary, the XOR logic layer was able to converge and learn the objective in most of the runs. This is significant considering the fact that the number of parameters in our proposed XOR layer is equal to the input length, i.e., one membership per input variable.
\begin{figure}[htb]
	\centering     %%% not \center
	\subfloat[DNF Task]{\label{fig:dnf_compare}\includegraphics[width=35mm]{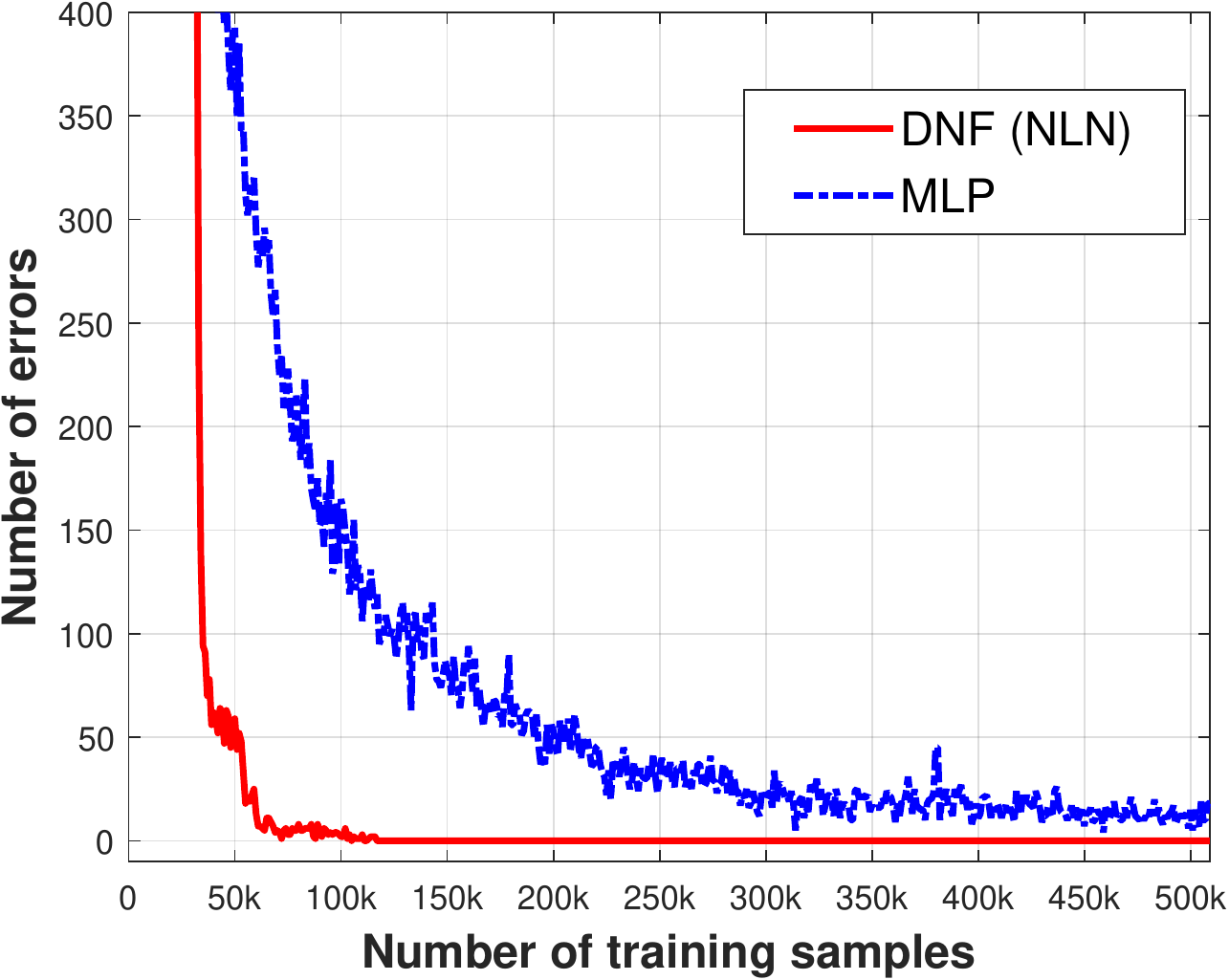}}
	\subfloat[Xor 50 Task]{\label{fig:xor_gate_compare}\includegraphics[width=35mm]{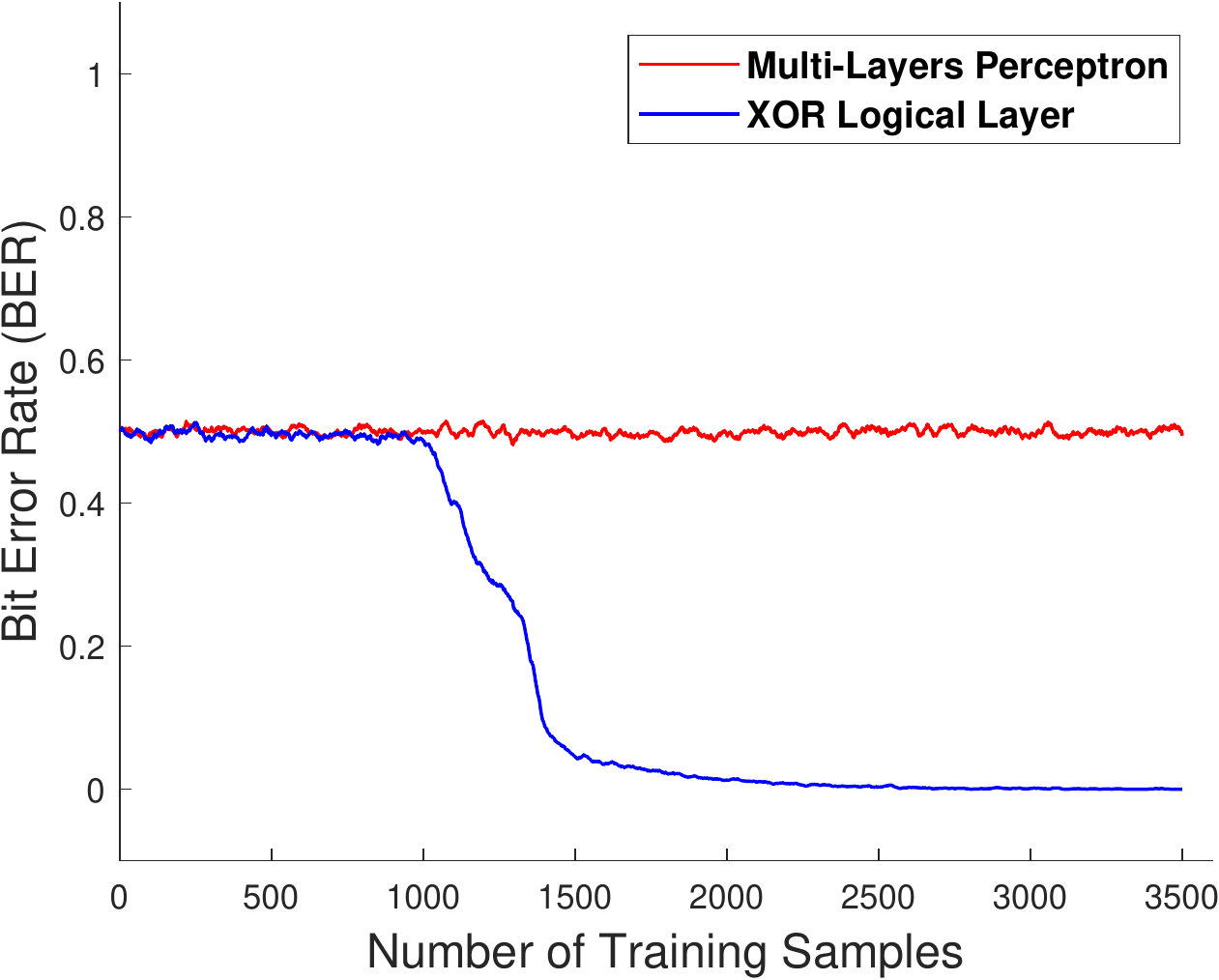}}
	\caption{Comparing MLP vs NLN for learning Boolean functions}
    \label{fig:mlp_compare}
    \vspace{-5mm}
\end{figure}
%\colorbox{yellow}{You may remove to th end if you are out of space} In conclusion, although a typical MLP structure is very powerful and can encode many complex patterns \colorbox{yellow}{"functions" instead of "patterns"??} using the adjustable weights, it lacks the rigidness and specificity of our proposed Logical Layers. 
%
%As such, often it is difficult to combine MLP layers effectively and simply adding more layers will not increase the network performance in many probems. On the contrary, the logical layers, by design, are forced to learn describable logical relationship between their inputs. As such, combining them in various arrangements enables the network to learn very complex logical flows. 
	
	\section{Generalization}
	\label{sec:LDPC}
	To evaluate the generalization, we consider learning an iterative decoding algorithm for the LDPC codes.
LDPC codes are linear error correcting codes that are widely used due to their capacity achieving performance \cite{richardson2001design}. 
%They can achieve Shannon limit of channel capacity as the length of the code becomes sufficiently large. 
%
One popular problem in the coding research is decoding these codes over the Binary Erasure Channel (BEC), where a subset of the bits in the received codeword (from the channel output) is marked as erased due to the channel corruption.
For BEC, decoding of the received LDPC codeword can be performed using an iterative Message Passing (MP) algorithm by enforcing the parity checks in the parity check matrix. 
%In the case of BEC, the decoder knows the location of the erasures, and it should retrieve the erased bit by using the structure of the parity check matrix. 
%However, the success of the iterative MP methods are quite limited in the case of short-length block codes \cite{pishro2004decoding}. 
To compare the performance of MLP vs NLN in learning a discrete-algorithmic task, we use the deep recurrent model that was introduced in \cite{payani2018ISTC} to learn the iterative decoding using MLP and NLN. 
%Without going into details of the constructing the recursive message passing scheme as described in \cite{payani2018ISTC}, we briefly explain the outline of the model.
%Let $\bH \in \{0,1\}^{r \times n}$ be the parity check matrix for a length $n$ regular LDPC code with parameters $(d_v,d_c)$, i.e., $\bH$ has $d_v$ and $d_c$ non-zero elements in each column and row, respectively. 

Simply put, in message passing decoding of LDPC codes, each iteration involves a forward and backward path. In the forward path, the content of each check-node is updated via function $\boldsymbol{\mathscr{F}}$ which takes all the connected variable-nodes as input. In the backward path, the content of each variable-nodes is then updated via function $\boldsymbol{\mathscr{B}}$ which takes the signal from all the connected check-nodes as input.

\if 0
We assign vector $V_i \in \real^{dim_v}$, $i \in \{1,\dots,n\}$ to each variable node and similarly vector $C_j \in \real^{dim_c}$, $j \in \{1,\dots,r\}$ to each check node.
Further, we define functions $\boldsymbol{\mathscr{F}} : \real^{ (d_c \times dim_v)} \mapsto \real^{dim_c}$, from the variable nodes to the check nodes and $\boldsymbol{\mathscr{G}} : \real^{(d_v \times dim_c)} \mapsto \real^{dim_v}$, from the check nodes to the variable nodes. Here, $dim_v$ and $dim_c$ denote the dimensions of the representation used in variable nodes and check nodes, respectively. We choose the one-hot representations of depth $3$ for each variable node where the three bit positions are correspond to values of '0', '1', and 'e' (erasure), respectively. 
%For example, if the bit value is '0', the vector is $[1,0,0]$. 
%The dimension $dim_c$ of check-node vectors is a design parameter of the model which controls the amount of information that can be sent from each check node to variable nodes.
We formulate the recursive formula for each variable node $v_i$ at time stamp $t+1$ as
\begin{equation}
\begin{aligned}
v_i^{(t+1)} &= \boldsymbol{\mathscr{G}} \big ( \, Vec ( \mathds{C}_i^{(t+1) ) } \,\big ) \quad, \text{where}\\
\mathds{C}_i^{(t+1)} &= \{ C_j^{(t+1)} | H(j,i)=1 , \ j \in \{1,\dots,r\} \, \}
\label{eq:backward}
\end{aligned}
\end{equation}
further,
\begin{equation}
\begin{aligned}
C_j^{(t+1)} &= \boldsymbol{\mathscr{F}} \big ( \, Vec (\mathds{V}_{i,j}^{(t)} )\,\big ) \quad, \text{where}\\
\mathds{V}_{i,j}^{(t)} &= \{ V_{i'}^{(t)} | H(j,i')=1 , \ i' \in \{1,\dots,n\} \, \}
\label{eq:forward},
\end{aligned}
\end{equation}
where $Vec$ function concatenates all the vectors in the input set.
The loss is defined as softmax cross-entropy between the estimated values of the variable nodes and the correct codeword vector, both represented as one-hot vectors of length $n$ and depth 3.
%, i.e.
%\begin{subequations}
%\begin{align}
%\mathcal{L}^{(t)} &= -\frac{1}{n} \sum_{i=1}^{n} \sum_{b=1}^{3} x_{ib}\log( {v'}_{ib}^{(t)} ) \\
%{v'}_i &= softmax( v_i )
%\end{align}
%\end{subequations}

\fi
%In a typical message  passing scheme, the forward-backward functions would be alternatively calculated multiple times for each node until the system reaches equilibrium. However, 
%Since the complexity and processing time would increases significantly not always allow for the neural model for large number of iterations specially when the code length grow larger, we limit our training to only three iterations and in the testing phase, we run our model multiple iterations using the trained weights. 

To compare the performance of MLP and NLN we design the forward-backward functions (i.e., $\boldsymbol{\mathscr{F}}$ and $\boldsymbol{\mathscr{B}}$) for the first model using MLP architecture (i.e., LDPC-MLP) and for the second model using NLN (LDPC-NLN).
%(CNF in forward path, and DNF in backward path) 
We use comparable number of parameters in each model (e.g. for LDPC(3,6) code of length 48 we use hidden dimension of size 200). 
%Table \ref{tbl:LDPC} summarizes the  layers dimension and type/activation functions used in each model. 
%We tried various settings and dimensions and number of layers and picked comparable settings in terms of the number of weights used in each model (see Table. \ref{tbl:LDPC}).
In both models, we use randomly generated codewords for a regular LDPC(3,6) code of length 48 as training data and set the number of message passing iteration in training to 3. ($t_{max}=3$). In testing phase, we run the trained models for many more iterations to see how much each model has generalized and learned the iterative algorithm. 
%We use erasure probability of $\epsilon=0.2$ in training and we test the models for different erasure probabilities. 

Fig~\ref{fig:LDPC} depicts the performance of two models in terms of bit error probability (BER).
As one may expect, the model based on MLP converges faster and generates lower BER for the setup used in training, i.e, $t=3$. 
%since the models were trained using the same setup and MLP is in general more flexible and can reduce the cost function more efficiently. 
However, increasing the number of iterations in test time not only does not improve the accuracy for LDPC-MLP, it even degrades the performance for $t>3$. On the other hand, as the number of iterations increases, the performance of LDPC-NLN model improves significantly.
% as one might expect from a message passing algorithm. 
%In other words, the rigid structures of the NLN architecture while may seem limiting on the surface, it allows for learning discrete-algorithms much more efficiently and can generalize better. 
Arguably, there are ways to improve the performance of MLP in such tasks (e.g, by significantly increasing the number of training iterations and enforcing the network to generate valid outcome at the end of each iterations by adding some penalty term). However, the NLN model provides a more natural way for learning such discrete-algorithmic tasks.
%Fig.~\ref{fig:ber_erasure} compares the performance of our proposed decoder based on NLN  to the state of the art belief propagation algorithm when applied to the erasure decoding of LDPC(3,6) code. Here, we used $t=10$ for our model and $t=50$ for the belief propagation decoder and as figure suggests, the accuracy of our model is higher than the belief propagation decoder while using smaller number of iterations.
\begin{figure}[tb]
	\centering     %%% not \center
	%\subfloat[BER per number of iterations]{\label{fig:ber_iter}
	\includegraphics[width=0.25\textwidth]{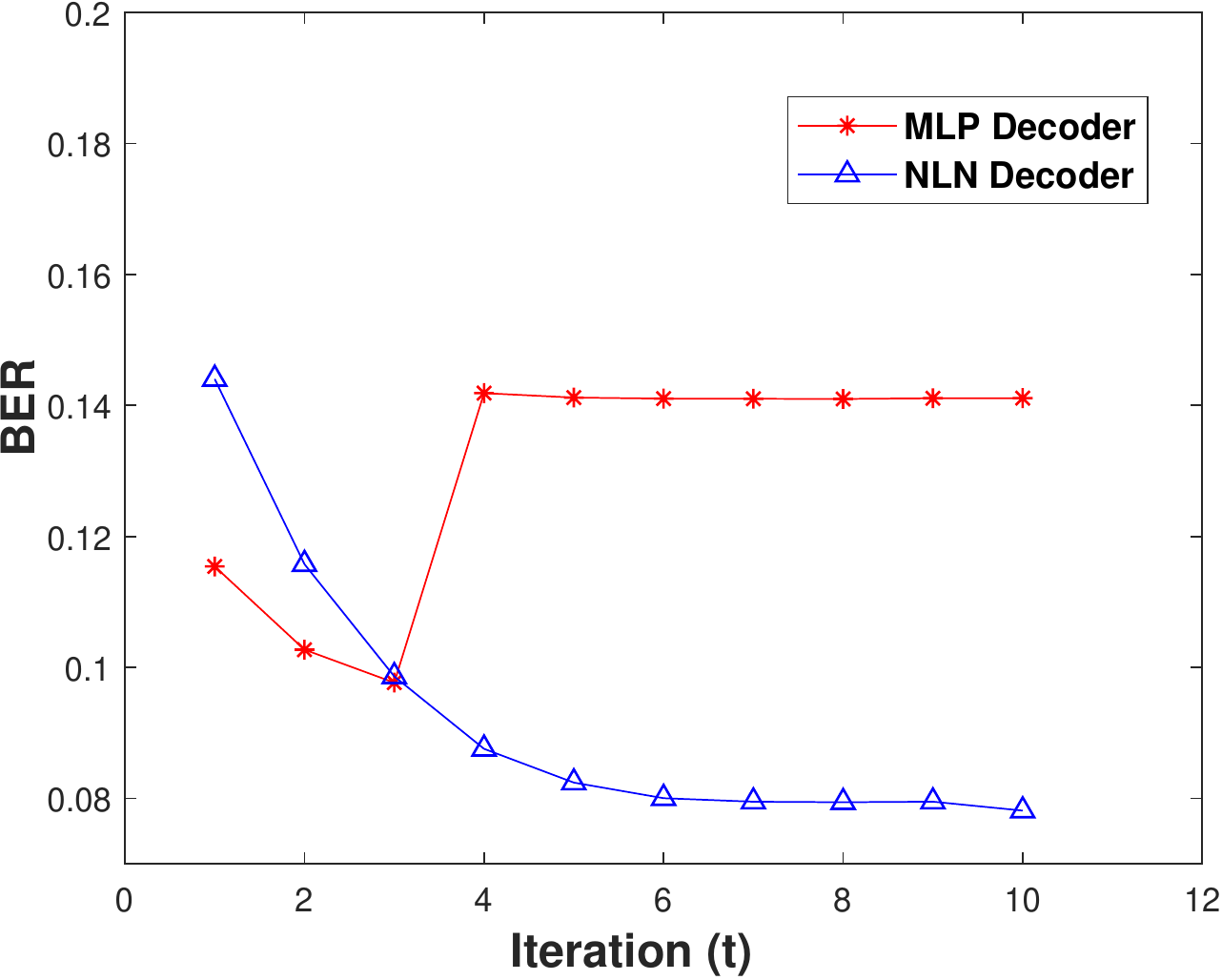}
%	
%	\subfloat[BER per erasure probability]{\label{fig:ber_erasure}\includegraphics[width=0.23\textwidth]{}}
	\caption{LDPC decoding over Erasure Channels}
	\label{fig:LDPC}
	\vspace{-5mm}
\end{figure}

	\section{Induction Logic Programming via NLN}
	\label{sec:ILP}
	One of the recent breakthroughs in solving ILP problems (specially for recusive and algorithmic tasks) is due to works such as \cite{cropper2015logical} which led to the invention of Metagol \cite{metagol}, the state of the art ILP solver capable of learning via predicate invention and recursion. Very recently in (Evans and Grefenstette 2018) the authors proposed a differentiable ILP (dILP) which also supports those features but using a neural network framework. While there are some other noticeable works on neural ILP solvers, we would mainly compare our proposed model to the Metagol and dILP since the other alternatives (for instance \cite{holldobler1999approximating,bader2008connectionist,francca2014fast,serafini2016logic}) do not support both of these important features (i.e., recursion and predecate invention) and therefore are not optimal for solving recursive algorithmic problems.

In this chapter, we introduced a new differentiable ILP solvers by exploiting the explicit representational power of our NLN which we believe is a significant improvement over the dILP and is more flexible than Metagol in terms of the need for an expert input.

For a more complete reference on ILP programming we refer the reader to \cite{muggleton1994inductive,dzeroski2007inductive}. Here, we give a brief background relevant to our proposed algorithm using an example problem. 
Logic programing is a programming paradigm in which we use formal logic (and usually first-order-logic) to describe relations between facts and rules of a program domain. 
In this framework rules are usually written as clauses of the form:
\begin{equation}\label{eq:clause}
H \leftarrow B_1,\,B_2,\,\dots,\,B_m
\end{equation}
where $H$ is called \texttt{head} of the clause and $B_1,\,B_2,\,\dots,\,B_m$ is called \texttt{body} of the clause. A clause of this form expresses that if all the Boolean terms in the \texttt{body} are true, the \texttt{head} is necessarily true.
We assume each of the terms $H$ and $B$ are made of \texttt{atoms}. Each \texttt{atom} is created by applying an $n$-ary Boolean function called \texttt{predicate} to some constants or variables. A \texttt{predicate} states the relation between some variables or constants in the logic program. Throughout this paper we will use small letters for constants and capital letters (A, B, C, ...) for variables. In most ILP systems, each predicate can be defined via several clauses of the form stated in (\ref{eq:clause}) which is equivalent to the DNF logical form. 

Let's consider the logic program that defines the \texttt{lessThan} predicate over natural numbers:
\begin{align} 
lessThan(A,B) &\leftarrow inc(A,B) \nonumber\\
lessThan(A,B) &\leftarrow lessThan(A,C), inc(C,B) \label{eq:lessThan}
\end{align}
and assume that our constants contains the set $C = \{0,1,2,3,4\}$ and the ordering of the natural numbers are defined using the predicate \texttt{inc} (which defines increments of 1). The set of background atoms which describe the known facts about this problem is the set $\mathcal{B} = \{inc(0,1),\, inc(1,2),\, inc(2,3),\, inc(3,4)\}$. 
%This way of defining a predicate which is widely used in various Logic Programming languages describe a predicate via a set of \texttt{rules} in disjunctive normal form (DNF). 
%As we will show however, this is not the only possible method and in some cases, a logic formula can be expresses with much smaller number of terms using different forms such as conjunctive normal form (CNF). As such, instead of looking at the predicate definition as set of rules, we allow for arbitrary first-order logic formula for each predicate. 
We associate two scalar functions $arity(p)$ and $var(p)$ corresponding to the number of input arguments for the predicate and the number of variables that can be used in defining predicate. 
Further, we associate a Boolean function $F_p$ to each (intensional) predicate which defines the Boolean function corresponding to the predicate $p$. 
In the above example $arity(lessThan)=2$ and $var(lessThan)=3$ and the predicate function $F_{lessThan}$ can be defined over all possible atoms which involve three variables A,B,C (e.g., it is defined as $F_{lessThan} = inc(A,B) \bigvee (lessThan(A,C) \bigwedge inc(C,B) )$ in (\ref{eq:lessThan})).   

We also distinguish between \text{extensional} and \text{intensional} predicates. The former is entirely defined by the ground facts (eg. $inc$ predicate in the above example), while the latter is defined using the other predicate function (eg. the $lessThan$ predicate in the above example)
Once we have the predicate formula (\ref{eq:lessThan}) which describe our \texttt{target} predicate, we can use rules of deduction and infer all the \texttt{consequences} of the program using  \texttt{forward chain of reasoning}, i.e, we apply the target predicate rules to the constants in the program iteratively. Let $P_i$ be the set of intensional predicates and $X^{(t)}$ be the set of deduced facts at time stamp $t$. We infer the $X^{(T)}$ where $T$ is the number of time stamps using the recursive formula:
\begin{align*}
X^{(i)} = &X^{(i-1)}  \bigcup \\%\,\,
&\{\,p(a_1,\dots,a_m) |  F_p(a_1,\dots,a_n)=True,\\ 
& a_k \in C , p \in P_i , n=var(p),m=arity(p) \},
\end{align*}
where, $X^{(0)}$ consist of background facts. As an example, for the logic program $lessThan$ we will have:
\begin{align*}
X^{(0)} &= \mathcal{B} = \{inc(0,1),\, inc(1,2),\, inc(2,3),\,inc(3,4)\} \\
X^{(1)} &= X^{(0)} \bigcup \,\{lt(0,1),\,lt(1,2),\,lt(2,3),\,lt(3,4)\}\\
X^{(2)} &= X^{(1)} \bigcup \,\{lt(0,2), \,lt(1,3), \,lt(2,4)\}\\
X^{(3)} &= X^{(2)} \bigcup \,\{lt(0,3), \,lt(1,4)\}\\
X^{(4)} &= X^{(3)} \bigcup \,\{lt(0,4)\},
\end{align*}
where we use $lt$ as shorthand for $lessThan$. Here, applying the predicate rules beyond $t=4$ does not yield any new ground atom. 
%Here, $X^{(1)}$ is obtained using the first clause of the (\ref{eq:lessThan}) and the following facts are generated via using the second rule of (\ref{eq:lessThan}). 

Given the background facts ($\mathcal{B})$ and a set of positive and negative examples ($\mathcal{P} \text{ and } \mathcal{N}$ respectively), The goal of ILP is that to learn a program (including target predicate and a number of possible auxiliary predicates) such that it entails all the positive examples and rejects all the negative ones. 
%For example, in the above example, we are interested in learning the rules for the $lessThan$ predicate given the set of Background atoms ($\mathcal{B})$ and the set of all given examples $\mathcal{P} \bigcup \mathcal{N}$ which can be any subset of $X^{(4)}$.  
The predicate function defined in (\ref{eq:lessThan}) is one such solution to the ILP problem which satisfies all the examples. 
%However, the solution is not necessarily unique and partly depends on the set of given target examples. 

%Case in point, given the partial set of examples $\mathcal{P}=\{lt(1,4),\,lt(2,4),\,lt(3,4)\}$ and $\mathcal{N}=\{lt(4,1),\,lt(4,2),\,lt(4,3)\}$, it can be easily verified that an answer such as $ll(A,B) \leftarrow inc(A,C)$ satisfies all the examples. 
%This demonstrate the importance of selecting the proper set of positive and negative example to arrive at a desirable generalization and to avoid trivial and most likely undesirable solution as stated. In most of the works in literature however this fact is not highlighted and a carefully picked subset of possible example is used for learning the ILP program without explicitly mentioning the subtly and importance of selection such a subset. 
%In contrast, unless explicitly stated, we assume that we use all the positive and negative example given the domain and set of constant for a given problem. This significantly simplifies the task of learning an ILP solution in various problems. 

We use the simple $lessThan$ logic program in above as an example to explain the basics of the proposed algorithm. 
Assume we consider a solution for the predicate function $F_{lt}$ containing at most three variables, i.e. $(A,B,C)$. We define the function $Perm(S,n)$ to return all the tuples of length $n$ from the elements of a set $S$. For example, $Perm(\{A,B\},2$) would give the set $\{(A,A),(A,B),(B,A),(B,B)\}$. Further, for any predicate $p$ and set of variables $V$ we define the set $Terms(p,V)$ as:
\begin{equation}
Terms(p,V) = \{ p(arg) | \,arg \in Perm(V,\,arity(p)\,)\,\}
\end{equation}
For now, if we exclude the use of functions in defining predicates, the set of all the atoms that can be used in defining target predicate can be expressed as:
\begin{align}
InputList (F_{lt}) &= Terms( inc, \{A,B,C\}) \\&\bigcup Terms( lt, \{A,B,C\}) 
\end{align}
This is correspond to the set $\{inc(A,A),\dots,inc(C,C),\, lt(A,A),\dots,\,lt(C,C)\}$. 

%\subsection{Limitations of the current approaches}

Most proposed ILP solvers examine only a very limited subset of possible combinations to find a solution. Metagol \cite{metagol}, the state of the art ILP solver based on meta interpretive learning \cite{cropper2015logical}, employs user-defined meta rules to reduce the set of possible combinations of terms. This requires an expert knowledge with regards to the possible forms of the solution, which is a restrictive approach. Further,this approach may require many trials to find the suitable set of meta rules.

Among the neural ILP solvers, the current state of the art solver proposed by \cite{evans2018learning} limits the number of possible terms to all the combinations containing only two atoms which significantly reduces the space of possible solutions and uses a softmax network to find a set of combination corresponding to the answer from all combinations containing only two atoms. While, in principle, this limitation can be alleviated by introducing more and more auxiliary predicates, this approach is not practical and requires huge amount of memory. The sheer number of possible combinations that these algorithms need to consider makes them inviable candidates for larger scale problems specially when it requires recursion and multiple steps of forward chain of reasoning. Consequently in \cite{evans2018learning} the experiments were limited to the predicates with arity of $2$
% with at most one additional variable.

%For example in the very small size $lessThan$, the possible combinations of terms for each rule used in the intensional predicate is ${18\choose 2}=153$ and this number grows very fast with increasing the number of variables. 

Our key idea is to employ our NLN framework to define the predicate functions corresponding to each intensional predicate ( instead of limiting the possible terms using search trees or considering limited combinations similar to previous approaches.) This allows for framing the ILP problem as an end-to-end neural network which can be trained via typical gradient based optimization techniques. Further, this would in general eliminate the restrictions for defining the predicate functions. In particular, in NLN we are not limited to use the DNF form for defining the predicate and we can employ any there Boolean network such as a CNF form and XOR logic to learn the predicate functions.

%The shape of the target predicate is usually dictated by the description of the problem, however, the number of auxiliary predicates and the number of variables used in defining each predicate is a design parameter of our system. Our goal is to learn the structure  of all the intensional predicates, $P_i$, which consists of the set of all auxiliary and target predicates (e.g $P_i = \{aux,target\}$ in this example). Unlike works such as  \cite{evans2018learning}, we do not restrict the shape and the number of elements to a very narrow set of possibilities. Instead, we define a set of all acceptable inputs for each predicates and allow for basically any learnable Boolean function of those atoms. Here, we explain the idea more clearly using the cyclic predicate example. Suppose that we allow for one additional variable for the auxiliary predicate 'aux' and no additional variable for the target predicate. We define the function $CombSet$ which for each predicate and set of input variables generates all possible distinct atoms (single predicate clause). For example, $CombSet(edge, \{X,Y,Z\})$ would generate  the set $\{edge(X,X),edge(X,Y),edge(X,Z),edge(Y,X), \dots,\\edge(Z,Z)\}$. Because we have allowed for one additional variable to be used in 'aux' predicate, the set of all input expressions for the auxiliary predicate would be:

%In our example, we can express the target predicate as:
%\begin{align}
%lt(A,B) = F_{lt} ( Input_{lt} )
%\end{align}

\subsection{NLN based neural ILP Solver}
We present our algorithm using the current $lessThan$ example. 
First, we define the predicate functions for each intensional predicate (only $lt$ here) using NLN. 
%In our For example we allow for 4 disjuction terms. 
%and as it can be the number of elements in the vector $Input_{lt}$ is 18. 
For example, we may use the DNF structure in NLN with a hidden layer of 4 (four disjunction terms) to define the $F_{lt}$. 

Next, we define the valuation vector for each predicate at time stamp $t$ as $Y_p^{(t)}$ which consists of (fuzzy) Boolean values of all the ground atoms involving that predicate. For example the vector $Y_{inc}$ includes the Boolean values for atoms in $\{inc(0,0),inc(0,1),\dots,inc(4,4)\}$. Here we remove the $t$ superscripts since the values of atoms from extensional predicate do not change over time. 
\begin{algorithm}[H]
	%\KwData{$Vars = \{A, B, C, \dots\}$}
	%\LinesNumbered
	\KwResult{$Y_{target}^{{T_{max}}} $}
	\For{$t\in \{1,\dots,T_{max}\}$}{
		\For{$p\in P_i$}{
			\For{$arg \in C^ {var(p)}$}{
				$\theta = \{ arg_0/A,\,arg_1/B,\,arg_2/C,\dots\}$ \\
				$x_i = InputList_p |_\theta$\\
				$arg_p = \{arg_0,\dots,arg_{arity(p)-1}  \}$\\
				$Y_p[arg_p] \leftarrow Y_p[arg_p] \,\bigvee \, F_{p}(x_i)$

			}
		}
	}
	\caption{Outline of the NLN based neural ILP solver\label{alg_step} }
\end{algorithm}
Algorithm \ref{alg_step} shows the outline of the $T_{max}$ steps of forward chain of reasoning in the proposed ILP solver algorithm. Here $\theta$ defines a substitution (replacing variables with constants) and $InputList_p |_\theta$ is a fuzzy Boolean vector formed by gathering the corresponding elements of $InputList_p$ function (after substitution of variables with constants) from the content of valuation vectors $Y_p$'s.
In actual tensorflow implementation \cite{tensorflow2015-whitepaper} we reformulate the problem in matrix form and before the start of the training we calculate the content of valuation vectors belong to all extensional predicates to speed up the training. Also, while the algorithm is described sequentially, we compute all the disjunction operations in the inner-most for-loop in parallel in a batch operation since they do not depend on each other. All the conjunction and disjunction operations in our algorithm is implemented as defined in (\ref{eq:BoolAlgebra}). 
	
We use the cross-entropy loss between $\hat{Y}_{target}$ (the ground truth provided by the positive and negative examples) and $Y_{target}^{(T_{max})}$ which is the output of Algorithm.\ref{alg_step}. 

\subsection{Training}
We train the model using ADAM \cite{KingmaB14} optimizer with learning rate of .001 and we initialize membership weights of the NLN using the approach described in section \ref{subsec:init_weights}. After the training is completed, a zero cross-entropy loss indicates that the model has been able to satisfy all the examples in the positive and negative sets. However, there can be some terms with membership weights of '1' in defining each predicate which are not necessary for the satisfiability of the solution. Since there is no gradient at this point we cannot directly remove those term during gradient descent algorithm unless we include some penalty terms. In practice we use a simpler approach.
In the final stage of algorithm we remove these terms by a simple satisfiability check, .i.e, if by setting any of membership variables with value of '1' to '0' the loss function does not change we remove that term from the outcome. Further, because at the end of convergence, the gradient can become small, to speed up the final stage of convergence, when the loss function is below some threshold, we move the membership variables $m_i$ toward binary values by multiplying the correspond weights $w_i$ to some positive constant larger than 1 (e.g. $1.2$ in our experiments) 
\subsection{Benchmark tasks}
We tested the proposed algorithm on the 20 symbolic tasks described in \cite{evans2018learning} and 
% which are taken from 4 domains: arithmetic, lists, graphs and family tree. 
the details of these experiments can be found in Appendices $G.1$ to $G.20$ of that paper. In Table \ref{tbl:BENCHMARK_ILP} we have listed the percentages of runs for each of the tasks that resulted in correct solution for the proposed algorithm and compared it to the baseline methods; dILP and Metagol. Although these are rather simple tasks, as shown in Table.\ref{tbl:BENCHMARK_ILP}, the dlp cannot always find a solution in many of the problems. This can be due to the fact that the algorithm depends on the initial weights and therefore many of the simulations may result in poor performance. Metagol, however is a deterministic approach and it can either find a solution or is unable at all. In general, if not provided with carefully tuned meta rules, which define templates for the axillary and target predicates, Metagol cannot learn many of the tasks involving recursion (e.g. Relatedness and Connectedness tasks in Table \ref{tbl:BENCHMARK_ILP}). In contrast, our proposed model can always find the correct solution for these tasks. 
	\vspace{-3mm}
	\begin{table}[h]
	\caption{NLN solver vs dILP and Metagol in benchmark tasks}

	\label{tbl:BENCHMARK_ILP}
	%\centering 
	\begin{tabular} {l c  c c}

		Domain/Task   & dILP & Metagol & NLN \\		
		\hline
		Arithmetic/Predecessor &\textbf{100}  & \textbf{100}&  \textbf{100} \\ 
		Arithmetic/Even        & \textbf{100}  &\textbf{100}  &   \textbf{100}\\   
		Arithmetic/Even-Odd    & 49  & \textbf{100} & \textbf{100}  \\  
		Arithmetic/Less than   &\textbf{100}   & \textbf{100} & \textbf{100}  \\   
		Arithmetic/Fizz        & 10  & \textbf{100} &\textbf{100}  \\   
		Arithmetic/Buzz        & 35  &\textbf{100}  & \textbf{100}  \\    
		List/Member      & \textbf{100}  & \textbf{100} & \textbf{100}  \\   
		List/Length      & 93  & \textbf{100} & \textbf{100}  \\   
		Family Tree/Son         &\textbf{100}   &\textbf{100}  &  \textbf{100} \\ 
		Family Tree/GrandParent & 97  &\textbf{100}  & \textbf{100} \\   
		Family Tree/Husband     & \textbf{100}  & \textbf{100} &  \textbf{100} \\   
		Family Tree/Uncle       & 70  & \textbf{100} &  \textbf{100} \\  
		Family Tree/Relatedness & \textbf{100}  &0  & \textbf{100}  \\   
		Family Tree/Father      & \textbf{100}  & \textbf{100} &  \textbf{100} \\    
		Graph/Undirected Edge & \textbf{100}  & \textbf{100} & \textbf{100}  \\   
		Graph/Adjacent to Red &  51 & \textbf{100} &  \textbf{100} \\   
		Graph/Two Children   &  95 & \textbf{100} & \textbf{100}  \\   
		Graph/Graph Colouring&  95 & \textbf{100} & \textbf{100} \\   
		Graph/Connectedness&  \textbf{100} & 0 & \textbf{100}  \\   
		Graph/Cyclic&  \textbf{100} & 0 &  \textbf{100} \\   
	\end{tabular}
\end{table}
\vspace{-4mm}
\subsection{Learning Decimal Arithmetic}
The 20 tasks in the previous section are rather simple tasks.
%for which the predicate could be defined using at most two atoms in the body of each clause and the target predicates do not use more than three variables.
For more complex tasks for which the predicate definition requires more atoms and the arity of predicates are higher than two, methods such as dILP cannot be used. Even Metagol can only learn such tasks that require recursion when the appropriate rule templates for the are provided by an expert. Here, we apply our method to learn more complex recursive arithmetic tasks. We first describe the addition problem for the natural number domain and then we use the addition predicate as background knowledge in second task to learn the multiplication. 
\subsubsection{Addition Task}
We use $C=\{0,1,2,3,4,5\}$ as constants and our background knowledges is consist of $\mathcal{B}=\{zero(0), eq(0,0), \dots,eq(4,4), inc(0,1), \dots,inc(3,4)\}$, where  $inc$ defines increment and $eq$ tests for equality. The target predicate is $add(A,B,C)$ and we allow for use of two additional variables (i.e., $var(add)=3+2=5$). As usual, we use a DNF network for learning $F_{add}.$ 
%It is important to note that learning predicates involving 3 arguments and 5 variables in completely out of the scope of algorithms such as dILP. 
%Here, the number of items in $InputList(F_{add})$ is $180$ (since $var(add)=5$ and  $5^1+5^2+5^2+5^3=180$) and we allow for up to 5 disjunction clauses in our DNF network which amounts to total of $905$ ($5*180+5$) trainable membership weights. 
One of the solutions that our model finds is:
\begin{align*}
add(A,B,C) &\leftarrow zero(B), \, eq(A,C) \\
add(A,B,C) &\leftarrow add(A,D,E), \, inc(D,B), \,inc(C,E)
\end{align*}
\subsubsection{Multiplication Task}
Next, we add the learned addition predicate to the background knowledge of the previous experiment and then try to learn the $mul(A,B,C)$ predicate. One of the obtained solutions is:
\begin{align*}
\vspace{-3mm}
mul(A,B,C) &\leftarrow zero(B), \, zero(C) \\
mul(A,B,C) &\leftarrow mul(B,A,C) \\
mul(A,B,C) &\leftarrow mul(A,D,E),  inc(D,B), plus(E,A,C)
\end{align*}
It is worth nothing that  to the best of our knowledge, learning recursive algorithmic tasks like this using only positive and negative examples and without using any template for defining viable option (other than assuming DNF form) is beyond the power of any current ILP solver. Indeed, most neural ILP solvers are either incapable of learning recursion or like dlp have limited scope and cannot be used to learn complex predicates. While, tasks such as decimal and binary addition and multiplications can be learned with very sophisticated neural algorithms such as \cite{kaiser2015neural}, they lack the generalization power of ILP and their performance significantly drops when the size of the problem grows. Furthermore, the learned algorithm is not explicit in nature and acquired knowledge cannot be transfered to another problem easily.
\subsection{Sorting an ordered list}
The sorting tasks is more complex than the previous tasks since it requires the list semantics. We implement the list semantic by allowing the use of functions in defining predicates. For a data of type \texttt{list}, we define two functions $H(X)$ and $t(X)$ which allow for decomposing a list into head and tail elements, i.e $A=[H(A)|t(A)]$.
% typical to most of the ILP systems. 
We use elements of $\{a,b,c,d\}$ and all the lists made from permutations of up to three elements as constants in the program. We use extensional predicates such as $gt$ (greater than), $eq$ (equals) and $lte$ (less than or equal) to define ordering between the elements as part of the background knowledge. We allow for using two additional variables in defining the predicate $sort(A,B)$. One of the solution that our model finds is:

\topskip=1pt
\begin{align*}
\noindent 
sort(A,B) &\leftarrow sort(H(A),C),\, lte(t(C),t_A), \\ &eq(H(B),C),\,eq(t_A,t_B) \\
sort(A,B) &\leftarrow sort(H(A),C),\, gt(t(C),t_A),\, eq(t(B),t(C)),\, \\ & eq(H(D),H(C),\, eq(t(A),t(D),\, sort(D,H(B))
\end{align*}
To the best of our knowledge, learning a recursive solution like this which involves clauses with 6 atoms and include 4 variables (and their functions) is beyond the power of any existing neural ILP solver. 
	\section{Conclusion}
	\label{sec:conclusion}
	We have introduced NLN as a new paradigm of neural networks designed for explicit learning and representation of Boolean functions. Using various experiments we showed their effectiveness in learning the logical representations. Further, we demonstrated their generalization superiority to the traditional MLP in a discrete iterative algorithmic. Finally, by proposing a new algorithm for learning ILP problems we demonstrated the importance of the explicit logical representation that is achieved using NLN.

	%%%%%%%%%%%%%%%%%%%%%%%%%%%%%%%%%%%%%%%%%%%%%%%%%%%%%%%%%%%%%%%%%%%%%%%%%%%%%%%
	\appendix
	\section{ Proof of Theorem 1.}
	\label{proof_th1}
	\begin{proof}
First consider that for the case where
%when $XOR(\bx)$ is equal to $'1'$ or equivalently when 
the number of $'1'$s in $\bx$ is odd (i.e. $XOR(x)=1$), none of the functions $f_i(\bx)$ can be equal to zero since the sum of odd number of elements from the set $\{-1,1\}$ cannot be zero. Therefore, the statement in (\ref{subeq:xor1}) is true due to (\ref{subeq:xor2}).
Now consider the case that $XOR(\bx)$ is zero. We must show that at least one of the $k$ functions $f_i(\bx)$ would be equal to zero in this case. 
%
%We define $k$ coefficient vectors $\{M_1,\dots M_k\}$, 
Let $M_i \in \{-1,1\}^n$ be the vector of coefficients for $f_i(\bx)$ 
%From construction of $f_i(\bx)$, it is apparent that each vector is made by circular shift of previous vector by one elements to the left. 
and $s$ be the number of ones in the input vector $\bx$. Further, for any  $f_i$, let $n_i^{(1)}$ and $n_i^{(-1)}$ be the number of corresponding 1 and -1 coefficients that matches the positions of elements of '1' in vector $\bx$. 
%$0 \le s \le 2k$. 
We notice that the sign of exactly two elements in $M_i$ and $M_{i+1}$ changes when we go from  $f_i(\bx)$ to $f_{i+1}(\bx)$ and those signs remain unchanged in the next set of functions. As we have $k$ functions and $s \le 2k$, this would guarantee that the sign of the coefficients corresponding to '1' elements changes exactly once in the set of $k$ functions. Thus, in one of the functions, let's say the one corresponding to the $j^{th}$ coefficient vector we would have $n_j^{(1)}=n_1^{(-1)}$ and $n_j^{(-1)}=n_1^{(1)}$ which means  $f_j(\bx)=- f_1(\bx)$. Since the difference between each consecutive $f_i$ can be zero or $\pm2$, this guarantees that at some point one of the $f_i$'s ($1 \le i \le j$) should be equal to zero.

In the above arguments we assumed $n$ is an even number. %However, if $n$ is an odd number, we can simply append a '0' element to the input vector and knowing that '0' has no effect on the results of XOR, the above arguments still hold.
However, if $n$ is an odd number, we can modify it to the $n+1$ problem by appending an extra ’$0$’ entry to the input vector $\bx$. Since ’$0$’ has no effect on the results of XOR, the above arguments still hold.
\end{proof}

	%%%%%%%%%%%%%%%%%%%%%%%%%%%%%%%%%%%%%%%%%%%%%%%%%%%%%%%%%%%%%%%%%%%%%%%%%%%%%%%
	\bibliographystyle{icml2019}
	\bibliography{refs_payani}

\begin{thebibliography}{24}
\providecommand{\natexlab}[1]{#1}
\providecommand{\url}[1]{\texttt{#1}}
\expandafter\ifx\csname urlstyle\endcsname\relax
  \providecommand{\doi}[1]{doi: #1}\else
  \providecommand{\doi}{doi: \begingroup \urlstyle{rm}\Url}\fi

\bibitem[Abadi et~al.(2015)Abadi, Agarwal, Barham, Brevdo, Chen, Citro,
  Corrado, Davis, Dean, Devin, Ghemawat, Goodfellow, Harp, Irving, Isard, Jia,
  Jozefowicz, Kaiser, Kudlur, Levenberg, Man\'{e}, Monga, Moore, Murray, Olah,
  Schuster, Shlens, Steiner, Sutskever, Talwar, Tucker, Vanhoucke, Vasudevan,
  Vi\'{e}gas, Vinyals, Warden, Wattenberg, Wicke, Yu, and
  Zheng]{tensorflow2015-whitepaper}
Abadi, M., Agarwal, A., Barham, P., Brevdo, E., Chen, Z., Citro, C., Corrado,
  G.~S., Davis, A., Dean, J., Devin, M., Ghemawat, S., Goodfellow, I., Harp,
  A., Irving, G., Isard, M., Jia, Y., Jozefowicz, R., Kaiser, L., Kudlur, M.,
  Levenberg, J., Man\'{e}, D., Monga, R., Moore, S., Murray, D., Olah, C.,
  Schuster, M., Shlens, J., Steiner, B., Sutskever, I., Talwar, K., Tucker, P.,
  Vanhoucke, V., Vasudevan, V., Vi\'{e}gas, F., Vinyals, O., Warden, P.,
  Wattenberg, M., Wicke, M., Yu, Y., and Zheng, X.
\newblock {TensorFlow}: Large-scale machine learning on heterogeneous systems,
  2015.
\newblock URL \url{https://www.tensorflow.org/}.
\newblock Software available from tensorflow.org.

\bibitem[Bader et~al.(2008)Bader, Hitzler, and
  H{\"o}lldobler]{bader2008connectionist}
Bader, S., Hitzler, P., and H{\"o}lldobler, S.
\newblock Connectionist model generation: A first-order approach.
\newblock \emph{Neurocomputing}, 71\penalty0 (13-15):\penalty0 2420--2432,
  2008.

\bibitem[Collobert \& Weston(2008)Collobert and Weston]{collobert2008unified}
Collobert, R. and Weston, J.
\newblock A unified architecture for natural language processing: Deep neural
  networks with multitask learning.
\newblock In \emph{Proceedings of the 25th international conference on Machine
  learning}, pp.\  160--167. ACM, 2008.

\bibitem[Cropper \& Muggleton(2015)Cropper and Muggleton]{cropper2015logical}
Cropper, A. and Muggleton, S.~H.
\newblock Logical minimisation of meta-rules within meta-interpretive learning.
\newblock In \emph{Inductive Logic Programming}, pp.\  62--75. Springer, 2015.

\bibitem[Cropper \& Muggleton(2016)Cropper and Muggleton]{metagol}
Cropper, A. and Muggleton, S.~H.
\newblock Metagol system.
\newblock https://github.com/metagol/metagol, 2016.
\newblock URL \url{https://github.com/metagol/metagol}.

\bibitem[Dahl et~al.(2012)Dahl, Yu, Deng, and Acero]{dahl2012context}
Dahl, G.~E., Yu, D., Deng, L., and Acero, A.
\newblock Context-dependent pre-trained deep neural networks for
  large-vocabulary speech recognition.
\newblock \emph{IEEE Transactions on audio, speech, and language processing},
  20\penalty0 (1):\penalty0 30--42, 2012.

\bibitem[Duch(2006)]{duch2006k}
Duch, W.
\newblock K-separability.
\newblock In \emph{International Conference on Artificial Neural Networks},
  pp.\  188--197. Springer, 2006.

\bibitem[Dzeroski(2007)]{dzeroski2007inductive}
Dzeroski, S.
\newblock Inductive logic programming in a nutshell.
\newblock \emph{Introduction to Statistical Relational Learning [16]}, 2007.

\bibitem[Evans \& Grefenstette(2018)Evans and Grefenstette]{evans2018learning}
Evans, R. and Grefenstette, E.
\newblock Learning explanatory rules from noisy data.
\newblock \emph{Journal of Artificial Intelligence Research}, 61:\penalty0
  1--64, 2018.

\bibitem[Fran{\c{c}}a et~al.(2014)Fran{\c{c}}a, Zaverucha, and
  Garcez]{francca2014fast}
Fran{\c{c}}a, M.~V., Zaverucha, G., and Garcez, A. S.~d.
\newblock Fast relational learning using bottom clause propositionalization
  with artificial neural networks.
\newblock \emph{Machine learning}, 94\penalty0 (1):\penalty0 81--104, 2014.

\bibitem[H{\"o}lldobler et~al.(1999)H{\"o}lldobler, Kalinke, and
  St{\"o}rr]{holldobler1999approximating}
H{\"o}lldobler, S., Kalinke, Y., and St{\"o}rr, H.-P.
\newblock Approximating the semantics of logic programs by recurrent neural
  networks.
\newblock \emph{Applied Intelligence}, 11\penalty0 (1):\penalty0 45--58, 1999.

\bibitem[Iyoda et~al.(2003)Iyoda, Nobuhara, and Hirota]{iyoda2003solution}
Iyoda, E.~M., Nobuhara, H., and Hirota, K.
\newblock A solution for the n-bit parity problem using a single translated
  multiplicative neuron.
\newblock \emph{Neural Processing Letters}, 18\penalty0 (3):\penalty0 233--238,
  2003.

\bibitem[Kaiser \& Sutskever(2015)Kaiser and Sutskever]{kaiser2015neural}
Kaiser, {\L}. and Sutskever, I.
\newblock Neural gpus learn algorithms.
\newblock \emph{arXiv preprint arXiv:1511.08228}, 2015.

\bibitem[Kingma \& Ba(2014)Kingma and Ba]{KingmaB14}
Kingma, D.~P. and Ba, J.
\newblock Adam: A method for stochastic optimization.
\newblock \emph{CoRR}, abs/1412.6980, 2014.

\bibitem[Krizhevsky et~al.(2012)Krizhevsky, Sutskever, and
  Hinton]{krizhevsky2012imagenet}
Krizhevsky, A., Sutskever, I., and Hinton, G.~E.
\newblock Imagenet classification with deep convolutional neural networks.
\newblock In \emph{Advances in neural information processing systems}, pp.\
  1097--1105, 2012.

\bibitem[Minsky \& Papert(2017)Minsky and Papert]{minsky2017perceptrons}
Minsky, M. and Papert, S.~A.
\newblock \emph{Perceptrons: An introduction to computational geometry}.
\newblock MIT press, 2017.

\bibitem[Muggleton \& De~Raedt(1994)Muggleton and
  De~Raedt]{muggleton1994inductive}
Muggleton, S. and De~Raedt, L.
\newblock Inductive logic programming: Theory and methods.
\newblock \emph{The Journal of Logic Programming}, 19:\penalty0 629--679, 1994.

\bibitem[Payani \& Fekri(2018)Payani and Fekri]{payani2018ISTC}
Payani, A. and Fekri, F.
\newblock Decoding ldpc codes on binary erasure channels using deep recurrent
  neural-logic layers.
\newblock In \emph{Turbo Codes and Iterative Information Processing (ISTC),
  2018 International Symposium On}. IEEE, 2018.

\bibitem[Richardson et~al.(2001)Richardson, Shokrollahi, and
  Urbanke]{richardson2001design}
Richardson, T.~J., Shokrollahi, M.~A., and Urbanke, R.~L.
\newblock Design of capacity-approaching irregular low-density parity-check
  codes.
\newblock \emph{IEEE transactions on information theory}, 47\penalty0
  (2):\penalty0 619--637, 2001.

\bibitem[Serafini \& Garcez(2016)Serafini and Garcez]{serafini2016logic}
Serafini, L. and Garcez, A.~d.
\newblock Logic tensor networks: Deep learning and logical reasoning from data
  and knowledge.
\newblock \emph{arXiv preprint arXiv:1606.04422}, 2016.

\bibitem[Siegelmann \& Sontag(1992)Siegelmann and
  Sontag]{siegelmann1992computational}
Siegelmann, H.~T. and Sontag, E.~D.
\newblock On the computational power of neural nets.
\newblock In \emph{Proceedings of the fifth annual workshop on Computational
  learning theory}, pp.\  440--449. ACM, 1992.

\bibitem[Steinbach \& Kohut(2002)Steinbach and Kohut]{steinbach2002neural}
Steinbach, B. and Kohut, R.
\newblock Neural networks--a model of boolean functions.
\newblock In \emph{Boolean Problems, Proceedings of the 5th International
  Workshop on Boolean Problems}, pp.\  223--240, 2002.

\bibitem[Vinyals et~al.(2015)Vinyals, Fortunato, and
  Jaitly]{vinyals2015pointer}
Vinyals, O., Fortunato, M., and Jaitly, N.
\newblock Pointer networks.
\newblock In \emph{Advances in Neural Information Processing Systems}, pp.\
  2692--2700, 2015.

\bibitem[Wasserman(1989)]{wasserman1989neural}
Wasserman, P.~D.
\newblock \emph{Neural computing: theory and practice}.
\newblock Van Nostrand Reinhold Co., 1989.

\end{thebibliography}
	%%%%%%%%%%%%%%%%%%%%%%%%%%%%%%%%%%%%%%%%%%%%%%%%%%%%%%%%%%%%%%%%%%%%%%%%%%%%%%%
	%%%%%%%%%%%%%%%%%%%%%%%%%%%%%%%%%%%%%%%%%%%%%%%%%%%%%%%%%%%%%%%%%%%%%%%%%%%%%%%
	
	\
\end{document}